\newtheorem{theorem}{Theorem}
\newtheorem{defn}{Definition}
\newtheorem{proposition}{Proposition}
\newtheorem{example}{Example}
\begin{document}
%
\title{Matroidal structure of generalized rough sets based on symmetric and transitive relations}


\author{\IEEEauthorblockN{Bin Yang and William Zhu}
\IEEEauthorblockA{Lab of Granular Computing, Zhangzhou Normal University, Zhangzhou 363000, China.\\
Email: yangbin$_{-}$0906tdcq@126.com, williamfengzhu@gmail.com
}}
\maketitle

\begin{abstract}
Rough sets are efficient for data pre-process in data mining. Lower and upper approximations are two core concepts of rough sets. This paper studies generalized rough sets based on symmetric and transitive relations from the operator-oriented view by matroidal approaches.
We firstly construct a matroidal structure of generalized rough sets based on symmetric and transitive relations, and provide an approach to study the matroid induced by a symmetric and transitive relation. Secondly, this paper establishes a close relationship between matroids and generalized rough sets. Approximation quality and roughness of generalized rough sets can be computed by the circuit of matroid theory. At last, a symmetric and transitive relation can be constructed by a matroid with some special properties.
\end{abstract}

\begin{IEEEkeywords}
Generalized rough sets, matroid, circuit, union of matroids, upper and lower approximations.
\end{IEEEkeywords}

  %
  %
\section{Introduction}\label{section: introduction}
Rough set theory was originally proposed by Pawlak~\cite{Pawlak91Rough,Pawlak82Rough} as a tool for dealing with the
vagueness and granularity in information systems. This theory can approximately characterize an arbitrary subset of a universe
by using two definable subsets called lower and upper approximation operators~\cite{ChenZhangYeungTsang06Rough}. Now, with the
fast development of rough sets in recent years, it has already been applied in fields such as knowledge discovery~\cite{GalvezDCG00AnApplication},
machine learning, decision analysis, process control, pattern recognition and many other areas.

The core concepts of generalized rough sets
are lower and upper approximations based on relations on a universe. There are mainly two approaches for development of the rough set theory, the constructive and axiomatic approaches. In constructive approaches, lower and upper approximation operators are constructed from the primitive notions, such as binary relations on a universe, partitions of a universe, neighborhood systems; while the axiomatic approach, which is appreciate for studying the structures of rough set algebras, takes the lower and upper approximation operators as primitive notions. By taking advantage of these two approaches the rough set theory has been combined with other mathematical theories such as fuzzy sets, algebraic theory, topology and matroids. Specifically, the establishment of matroidal structures of rough sets may be much helpful for some problems of rough sets such as attribute reduction  and axiomatic in rough sets.

The concept of matroid was
originally introduced by Whitney~\cite{H.WhitneyOn} in 1935 as a generalization of graph theory and linear algebra. The concepts and results of
matroids were widely used in other fields such as algorithm design, combinatorial optimization and integer programming. Especially, the matroids provide well-established platforms for greedy algorithms. Since matroids appear in different mathematical branches, we can give explanations of matroidal structures in different mathematical backgrounds, then matroidal approaches plays a crucial role in other mathematical branches such as graph theory, linear algebra and rough sets\cite{LiLiu12Matroidal}.

The remainder of this paper is organized as follows: In Section \ref{section1}, some basic concepts and properties related to binary relations, generalized rough sets and matroids are introduced. In Section \ref{section3}, a matroidal structure of generalized rough sets based on symmetric and transitive relations is constructed. Moreover, we also explore the properties of the matroid induced by a symmetric and transitive relation. In Section \ref{section4}, the lower and upper approximations of generalized rough sets based on symmetric and transitive relations are described by circuits of matroids and an approach to generate a symmetric and transitive relation by a matroid is provided. Finally, Section \ref{section5} concludes this paper.
\section{Background }\label{section1}
In this section, we review some fundamental definitions and results of generalized rough sets
and matroids.
\subsection{Relations on a set}
In this subsection, we present some definitions and properties of binary relations used in this
paper. For detailed descriptions and proofs of them, please refer to~\cite{RajagopalMason92Discrete}.
\begin{defn}(Binary relation~\cite{RajagopalMason92Discrete})\label{definition1}
Let $U$ be a set, $U\times U$ the product set of $U$ and $U$. Any subset $R$ of $U\times U$ is
called a binary relation on $U$. For any $(x$, $y)\in U\times U$, if $(x$, $y)\in R$, then we say $x$
has relation with $y$, and denote this relationship as $xRy$.

For any $x\in U$, we call the set $\{y\in U\mid xRy\}$ the successor neighborhood of $x$ in $R$ and denote
it as $r_{R}(x)$.
\end{defn}

Throughout this paper, a binary relation is simply called a relation and it is defined on a finite and nonempty set.
The relation and its properties play important roles in studying generalized rough sets.
\begin{defn}(Symmetric relations~\cite{RajagopalMason92Discrete})
\label{defn}
Let $R$ be a relation on $U$. If for any $x$, $y\in U$, $y\in r_{R}(x)\Rightarrow x\in r_{R}(y)$, then
we say $R$ is symmetric.
\end{defn}

\begin{defn}(Transitive relations~\cite{RajagopalMason92Discrete})
\label{definition3}
Let $R$ be a relation on $U$. If for any $x$, $y$, $z\in U$, $y\in r_{R}(x)$ and $z\in r_{R}(y)\Rightarrow z\in r_{R}(x)$, then
we say $R$ is transitive.
\end{defn}

\subsection{Generalized rough sets}
\label{section2}
In this subsection, we present some definitions and results of generalized rough sets used in this paper. In this paper, we mainly study the generalized rough sets based on symmetric
and transitive relations. For detailed descriptions about generalized rough sets, please refer to ~\cite{YangLi06TheMinimization,ZhuWang03Reduction,ZhuWang06Axiomatic,Zhu07Generalized,Zhu09RelationshipBetween}.

\begin{defn}(Approximation space~\cite{ZhuWang03Reduction})
\label{definition4}
Let $R$ be a relation on a universe $U$. We call the ordered pair $(U$, $R)$ an approximation space.
\end{defn}

\begin{defn}(Generalized rough set~\cite{ZhuWang03Reduction})
\label{definition5}
Let $R$ be a relation on a universe $U$. A pair of approximation
operators $\underline{R}$, $\overline{R}$: $2^{U}\rightarrow 2^{U}$, are defined as follows:
for all $X\in 2^{U}$,
\begin{center}
$\underline{R}(X)=\{x\in U\mid r_{R}(x)\subseteq X\}$,\\
$\overline{R}(X)=\{x\in U\mid r_{R}(x)\bigcap X\neq \emptyset\}$.
\end{center}
They are called the lower approximation operator and the upper approximation operator, respectively.
For all $X\in 2^{U}$, if $\underline{R}(X)\neq \overline{R}(X)$, then $X$ is called a $R-$generalized rough set.
Otherwise, $X$ is called a $R-$precise set.
\end{defn}

\begin{theorem}~\cite{Zhu09RelationshipBetween}
\label{theorem1}
Let $R$ be a relation on universe $U$. For all $X\in 2^{U}$,\\
(1) $R$ is symmetric $\Leftrightarrow X\subseteq \underline{R}(\overline{R}(X))\Leftrightarrow \overline{R}(\underline{R}(X))\subseteq X$.\\
(2) $R$ is transitive $\Leftrightarrow \underline{R}(X)\subseteq \underline{R}(\underline{R}(X))\Leftrightarrow \overline{R}(\overline{R}(X))\subseteq \overline{R}(X)$.
\end{theorem}

In fact, Theorem \ref{theorem1} reveals the relationships between the properties of relation and approximation operators. In other words,
whether a relation is symmetric(transitive) or not, we could give the answer through the approximation operators.

\begin{defn}(Approximation quality and roughness~\cite{Pawlak91Rough})
\label{definition6}
Let $R$ be a relation on universe $U$. For all $X\in 2^{U}$, the approximation quality $\alpha_{R}(X)$
and roughness $\rho_{R}(X)$ of $X$ can be defined as follows:
\begin{center}
$\alpha_{R}(X)=\frac{|\underline{R}(X)|}{|\overline{R}(X)|}$,\\
$\rho_{R}(X)=1-\alpha_{R}(X)$.
\end{center}
\end{defn}

\subsection{Matroids}
Matroid theory was established as a generalization, or a connection, of graph theory and linear algebra.
This theory was used to study abstract relations on a subset, and it uses both of these areas of mathematics
for its motivation, its basic examples, and its notation. With the rapid development in recent years, the matroid
theory has already become an effective mathematic tool to study other mathematic branches. In this subsection, we
present definitions, examples and results of matroids used in this paper.

\begin{defn}(Matroid~\cite{Lai01Matroid})
\label{definition7}
A matroid is an ordered pair $M=(U, \mathcal{I})$ consisting a finite set $U$, and a collection $\mathcal{I}$ of subsets
of $U$ with the following three properties:\\
(I1) $\emptyset\in \mathcal{I}$;\\
(I2) If $I\in \mathcal{I}$, and $I'\subseteq I$, then $I'\in\mathcal{I}$;\\
(I3) If $I_{1}$, $I_{2}\in \mathcal{I}$, and $|I_{1}|<|I_{2}|$, then there exists $e\in I_{2}-I_{1}$ such that $I_{1}\bigcup\{e\}\in\mathcal{I}$, where $|I|$ denotes the cardinality of $I$.\\
Any element of $\mathcal{I}$ is called an independent set.
\end{defn}

\begin{example}
\label{example1}
Let $U=\{a$, $b$, $c\}$,
$\mathcal{I}_{1}=\{\emptyset$, $\{a\}$, $\{b\}$, $\{c\}$, $\{a$, $c\}$, $\{b$, $c\}\}$ and
$\mathcal{I}_{2}=\{\emptyset$, $\{a\}$, $\{b\}$, $\{c\}$, $\{a$, $b\}$, $\{b$, $c\}\}$.
Clearly, $\mathcal{I}_{1}$ and $\mathcal{I}_{2}$ satisfy the independent set axioms of matroids.
Hence $(U$, $\mathcal{I}_{1})$ and $(U$, $\mathcal{I}_{2})$ are matroids, respectively.
\end{example}

\begin{defn}(Dependent set~\cite{Lai01Matroid})
\label{definition8}
Let $M=(U$, $\mathcal{I})$ be a matroid. For any $X\subseteq U$, if $X\notin \mathcal{I}$, then we say
$X$ is a dependent set of $M$.
\end{defn}

\begin{defn}(Circuit~\cite{Lai01Matroid})
\label{definition9}
Let $M=(U$, $\mathcal{I})$ be a matroid. A minimal dependent set in $M$ is called a
circuit of $M$, and we denote the family of all circuits of $M$ by $\mathcal{C}(M)$.
\end{defn}

\begin{example}(Continued from Example \ref{example1})
\label{example2}
The family of dependent sets of $(U$, $\mathcal{I}_{1})$ is $\{\{a$, $b\}$, $\{a$, $b$, $c\}\}$
and the family of dependent sets of $(U$, $\mathcal{I}_{2})$ is $\{\{a$, $c\}$, $\{a$, $b$, $c\}\}$.
Hence, the family of circuits of $(U$, $\mathcal{I}_{1})$ and $(U$, $\mathcal{I}_{2})$ are $\mathcal{C}_{1}=\{\{a$, $b\}\}$ and
$\mathcal{C}_{2}=\{\{a$, $c\}\}$, respectively.
\end{example}

\begin{theorem}(Circuit axiom~\cite{Lai01Matroid})
\label{theorem2}
Let $M=(U$, $\mathcal{I})$ be a matroid and $\mathcal{C}=\mathcal{C}(M)$ , then
$\mathcal{C}$ satisfies the following three properties:\\
(C1) $\emptyset\notin\mathcal{C}$;\\
(C2) If $C_{1}$, $C_{2}\in \mathcal{C}$ and $C_{1}\subseteq C_{2}$, then $C_{1}=C_{2}$;\\
(C3) If $C_{1}$, $C_{2}\in \mathcal{C}$, $C_{1}\neq C_{2}$ and $e\in C_{1}\bigcap C_{2}$, then there
exists $C_{3}\in \mathcal{C}$ such that $C_{3}\subseteq (C_{1}\bigcup C_{2})-\{e\}$.
\end{theorem}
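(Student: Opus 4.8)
The plan is to verify the three properties in increasing order of difficulty, treating (C1) and (C2) as immediate consequences of the definitions and reserving the real work for the elimination axiom (C3).

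For (C1), I would simply observe that by (I1) the empty set lies in $\mathcal{I}$, so $\emptyset$ is independent and therefore cannot be a dependent set; since by Definition~\ref{definition9} every circuit is a (minimal) dependent set, $\emptyset\notin\mathcal{C}$. For (C2), I would argue directly from minimality: if $C_{1},C_{2}\in\mathcal{C}$ with $C_{1}\subseteq C_{2}$, then $C_{1}$ is a dependent subset of the minimal dependent set $C_{2}$, so a proper inclusion $C_{1}\subsetneq C_{2}$ would contradict the minimality of $C_{2}$; hence $C_{1}=C_{2}$.

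The substance is in (C3), which I would prove by contradiction, after first recording the auxiliary fact that in a finite matroid a set is dependent if and only if it contains a circuit (a minimal dependent subset of any dependent set is a circuit). Assume no circuit is contained in $(C_{1}\bigcup C_{2})-\{e\}$; then this set contains no circuit and is therefore independent. Since $C_{1}\neq C_{2}$, property (C2) forbids either circuit from containing the other, so I may choose $f\in C_{2}-C_{1}$. Because $C_{2}$ is a circuit, its proper subset $C_{2}-\{f\}$ is independent, and I would extend it to a maximal independent subset $I$ of $C_{1}\bigcup C_{2}$.

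I would then extract two elements of $(C_{1}\bigcup C_{2})-I$. First, $f\notin I$, for otherwise $C_{2}=(C_{2}-\{f\})\bigcup\{f\}\subseteq I$ would make the independent set $I$ dependent. Second, since $C_{1}$ is dependent and $I$ independent, $C_{1}\not\subseteq I$, so there is some $g\in C_{1}-I$; moreover $g\neq f$ because $g\in C_{1}$ while $f\notin C_{1}$. Hence $I$ omits the two distinct elements $f,g$ of $C_{1}\bigcup C_{2}$, giving $|I|\leq|C_{1}\bigcup C_{2}|-2<|C_{1}\bigcup C_{2}|-1=|(C_{1}\bigcup C_{2})-\{e\}|$. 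Applying the exchange axiom (I3) to the independent sets $I$ and $(C_{1}\bigcup C_{2})-\{e\}$ produces an element $x\in((C_{1}\bigcup C_{2})-\{e\})-I\subseteq(C_{1}\bigcup C_{2})-I$ with $I\bigcup\{x\}\in\mathcal{I}$, contradicting the maximality of $I$ among independent subsets of $C_{1}\bigcup C_{2}$. This contradiction establishes (C3). The main obstacle is exactly this last step: choosing $I$ to be maximal \emph{within} $C_{1}\bigcup C_{2}$ (rather than globally) so that the cardinality gap forced by the two missing elements $f$ and $g$ can be fed into (I3) to violate that very maximality.
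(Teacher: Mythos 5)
The paper gives no proof of this theorem: it is quoted verbatim from the matroid theory textbook \cite{Lai01Matroid} as a known result, so there is nothing in the paper to compare your argument against. Your proof is the standard one and it is correct. The observations for (C1) and (C2) are exactly right, the auxiliary fact that a set is dependent if and only if it contains a circuit is what makes the contradiction hypothesis in (C3) usable (it is where finiteness of $U$ enters, since a minimal dependent subset must be extracted), and the count $|I|\leq|C_{1}\bigcup C_{2}|-2<|(C_{1}\bigcup C_{2})-\{e\}|$ is justified because $f\neq g$ and $e\in C_{1}\bigcap C_{2}\subseteq C_{1}\bigcup C_{2}$. Your closing remark identifies the genuinely delicate point: $I$ must be taken maximal among independent subsets of $C_{1}\bigcup C_{2}$, not merely a maximal independent set of the whole matroid, so that the element produced by (I3) lands back inside $C_{1}\bigcup C_{2}$ and contradicts that maximality. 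No gaps.
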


\begin{theorem}~\cite{Lai01Matroid}
\label{theorem3}
Let $U$ be a nonempty and finite set and $\mathcal{C}$ a family of subsets of $U$.
If $\mathcal{C}$ satisfies (C1), (C2) and (C3) of Theorem \ref{theorem2}, then there exists
$M=(U$, $\mathcal{I})$ such that $\mathcal{C}=\mathcal{C}(M)$.
\end{theorem}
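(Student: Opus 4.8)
The plan is to construct the matroid directly from $\mathcal{C}$ by declaring a subset of $U$ to be independent precisely when it contains no member of $\mathcal{C}$. Explicitly, I would put
\[
\mathcal{I}=\{I\subseteq U\mid C\not\subseteq I\ \text{for every}\ C\in\mathcal{C}\}
\]
and take $M=(U,\mathcal{I})$. The first two independence axioms are then immediate. Axiom (I1) holds because (C1) gives $\emptyset\notin\mathcal{C}$, and since $\emptyset$ has no subset other than itself it contains no member of $\mathcal{C}$; thus $\emptyset\in\mathcal{I}$. Axiom (I2) holds because, if $I\in\mathcal{I}$ and $I'\subseteq I$, any member of $\mathcal{C}$ contained in $I'$ would also be contained in $I$, contradicting $I\in\mathcal{I}$; hence $I'\in\mathcal{I}$. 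Neither argument uses (C3).

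Before verifying the exchange axiom (I3) I would record a consequence of (C3) that isolates where that axiom does its work. Claim: if $I\in\mathcal{I}$ and $x\in U$ with $I\cup\{x\}\notin\mathcal{I}$, then $I\cup\{x\}$ contains a unique circuit, and this circuit contains $x$. Any circuit inside $I\cup\{x\}$ must contain $x$, for otherwise it would lie inside $I$, contradicting $I\in\mathcal{I}$; and if there were two distinct such circuits they would share $x$, so (C3) would furnish a circuit inside $(I\cup\{x\})-\{x\}=I$, again a contradiction. This provides a well-defined \emph{fundamental circuit} for each such pair $(I,x)$.

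The main obstacle is the exchange axiom (I3), and I would treat it by contradiction. Suppose there exist $I_1,I_2\in\mathcal{I}$ with $|I_1|<|I_2|$ such that $I_1\cup\{e\}\notin\mathcal{I}$ for every $e\in I_2-I_1$, and among all such failing pairs choose one with $|I_1-I_2|$ minimal. If $I_1-I_2=\emptyset$, then $I_1\subseteq I_2$ and any $e\in I_2-I_1$ satisfies $I_1\cup\{e\}\subseteq I_2\in\mathcal{I}$, so $I_1\cup\{e\}\in\mathcal{I}$ by (I2) --- a contradiction. Hence $I_1-I_2\neq\emptyset$; fix $e_0\in I_1-I_2$, and for each $f\in I_2-I_1$ let $C_f\subseteq I_1\cup\{f\}$ be the fundamental circuit from the claim above, so that $f\in C_f$. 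The crux is then to apply the elimination axiom (C3) to these circuits so as to remove the fixed element $e_0$, producing either a direct augmentation of $I_1$ or a failing pair that is strictly closer in the sense of $|I_1-I_2|$, contradicting minimality. I expect essentially all of the difficulty to live in this step: choosing the right circuits to eliminate against and handling the resulting set-difference bookkeeping, with (C3) doing the decisive work.

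Finally I would check that $\mathcal{C}(M)=\mathcal{C}$. By construction a set is dependent in $M$ exactly when it contains a member of $\mathcal{C}$, so the circuits of $M$ are the minimal sets with this property. Each $C\in\mathcal{C}$ is dependent, and it is minimal: a proper subset of $C$ that were dependent would contain some $C'\in\mathcal{C}$ with $C'\subsetneq C$, contradicting (C2); hence $C$ is a circuit of $M$. Conversely, a circuit $D$ of $M$ is dependent, so it contains some $C\in\mathcal{C}$, and the minimality of $D$ forces $D=C\in\mathcal{C}$. Therefore $\mathcal{C}(M)=\mathcal{C}$, which is exactly the conclusion of the theorem.
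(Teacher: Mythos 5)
This theorem is quoted in the paper from the matroid-theory reference \cite{Lai01Matroid} and the paper supplies no proof of its own, so the only question is whether your argument stands on its own. Your overall strategy is the standard one: define $\mathcal{I}$ as the sets containing no member of $\mathcal{C}$, check (I1) and (I2), prove uniqueness of the fundamental circuit via (C3), verify (I3) by a minimal-counterexample argument, and finally identify $\mathcal{C}(M)$ with $\mathcal{C}$ using (C2). The parts you actually carry out --- (I1), (I2), the fundamental-circuit claim, and the final identification $\mathcal{C}(M)=\mathcal{C}$ --- are all correct.

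However, there is a genuine gap at exactly the point you flag yourself: the verification of (I3) is announced but not executed. You fix $e_0\in I_1-I_2$ and the fundamental circuits $C_f\subseteq I_1\cup\{f\}$ with $f\in C_f$, and then say that ``the crux is to apply (C3) to these circuits so as to remove $e_0$.'' But (C3) can only eliminate an element lying in the intersection of two \emph{distinct} circuits, and two of your circuits $C_f$, $C_{f'}$ are not guaranteed to both contain $e_0$ (or indeed to intersect in any prescribed element), so there is no elimination to perform with the objects you have set up. The standard completion works with circuits relative to $I_2$ rather than $I_1$: for each $f\in I_2-I_1$ consider $T_f=(I_2-\{f\})\cup\{e_0\}$. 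If some $T_f\in\mathcal{I}$, then the pair $(I_1,T_f)$ still violates (I3) (since $T_f-I_1\subseteq I_2-I_1$) but has $|I_1-T_f|=|I_1-I_2|-1$, contradicting minimality. Otherwise every $T_f$ contains a circuit $D_f$ which must contain $e_0$ and avoid $f$; picking $g\in I_2-I_1$, the circuit $D_g$ cannot satisfy $D_g-\{e_0\}\subseteq I_1$ (else $D_g\subseteq I_1$), so it contains some $h\in I_2-I_1$ with $h\neq g$, and then $D_g\neq D_h$ (as $h\in D_g$ but $h\notin D_h$) share $e_0$, so (C3) yields a circuit inside $(D_g\cup D_h)-\{e_0\}\subseteq I_2$, contradicting $I_2\in\mathcal{I}$. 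Without some such argument, the exchange axiom --- the only nontrivial part of the theorem --- remains unproved.
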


According to the above theorem, we see a matroid can be determined by its circuits.
\begin{defn}(Normal matroid~\cite{Lai01Matroid})
\label{definition10}
Let $M=(U$, $\mathcal{I})$ be a matroid. If $\bigcup \mathcal{I}=U$, then we call
$M$ is a normal matroid.
\end{defn}

\begin{example}(Continued from Example \ref{example1})
\label{example3}
It is easy to prove the following results:\\
\begin{center}
$\bigcup\mathcal{I}_{1}=\{a$, $b$, $c\}=U$,\\
$\bigcup\mathcal{I}_{2}=\{a$, $b$, $c\}=U$.
\end{center}
Hence, the matroids $(U$, $\mathcal{I}_{1})$ and $(U$, $\mathcal{I}_{2})$ are two normal
matroids.
\end{example}

\begin{defn}(Union of matroids~\cite{Lai01Matroid})
\label{definition12}
Let $M_{1}=(U$, $\mathcal{I}_{1})$ and $M_{2}=(U$, $\mathcal{I}_{2})$ be two matroids.
Then we define the union of $M_{1}$ and $M_{2}$ as follows:\\
$M_{1}+M_{2}=(U$, $\mathcal{I}_{1}+\mathcal{I}_{2})$, where $\mathcal{I}_{1}+\mathcal{I}_{2}=\{I_{1}\bigcup I_{2}\mid I_{1}\in \mathcal{I}_{1}$, $I_{2}\in \mathcal{I}_{2}\}$.
\end{defn}
\begin{theorem}\cite{Lai01Matroid}
\label{theorem4}
If $M_{1}$ and $M_{2}$ are two matroids on $U$, then $M_{1}+M_{2}$ is a matroid.
\end{theorem}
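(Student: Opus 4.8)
The plan is to verify directly that the pair $(U, \mathcal{I}_1 + \mathcal{I}_2)$ satisfies the three independent set axioms (I1), (I2), (I3) of Definition \ref{definition7}, where $\mathcal{I}_1 + \mathcal{I}_2 = \{I_1 \cup I_2 \mid I_1 \in \mathcal{I}_1, I_2 \in \mathcal{I}_2\}$. Axiom (I1) is immediate: since $\emptyset \in \mathcal{I}_1$ and $\emptyset \in \mathcal{I}_2$, we have $\emptyset = \emptyset \cup \emptyset \in \mathcal{I}_1 + \mathcal{I}_2$. For (I2), suppose $I = I_1 \cup I_2 \in \mathcal{I}_1 + \mathcal{I}_2$ with $I_i \in \mathcal{I}_i$, and let $I' \subseteq I$. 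I would set $I_1' = I' \cap I_1$ and $I_2' = I' - I_1$; then $I_1' \subseteq I_1$ gives $I_1' \in \mathcal{I}_1$ by (I2) for $M_1$, while $I_2' \subseteq I_2$ (since $I' \subseteq I_1 \cup I_2$) gives $I_2' \in \mathcal{I}_2$ by (I2) for $M_2$, and $I_1' \cup I_2' = I'$, so $I' \in \mathcal{I}_1 + \mathcal{I}_2$. Both of these are routine.

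The real content is the exchange axiom (I3), and this is where I expect the main obstacle. Take $A, B \in \mathcal{I}_1 + \mathcal{I}_2$ with $|A| < |B|$; I must produce $e \in B - A$ with $A \cup \{e\} \in \mathcal{I}_1 + \mathcal{I}_2$. First I would normalize the representations: writing $A = A_1 \cup A_2$ and $B = B_1 \cup B_2$ with $A_i, B_i \in \mathcal{I}_i$, I may replace $A_2$ by $A_2 - A_1$ and $B_2$ by $B_2 - B_1$ (still independent in $M_2$ by (I2)) so that the two decompositions become disjoint, i.e. $A = A_1 \sqcup A_2$ and $B = B_1 \sqcup B_2$. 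Then $|A_1| + |A_2| = |A| < |B| = |B_1| + |B_2|$, so at least one of $|B_1| > |A_1|$ or $|B_2| > |A_2|$ holds.

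The key device is to choose, among all disjoint decompositions of the fixed sets $A$ and $B$ of the above form, one that maximizes the quantity $|A_1 \cap B_1| + |A_2 \cap B_2|$. After this choice, and assuming without loss of generality that $|B_1| > |A_1|$, I apply the exchange axiom (I3) inside the matroid $M_1$ to the independent sets $A_1$ and $B_1$: there is $e \in B_1 - A_1$ with $A_1 \cup \{e\} \in \mathcal{I}_1$. If $e \notin A$, then $e \in B - A$ and $A \cup \{e\} = (A_1 \cup \{e\}) \cup A_2 \in \mathcal{I}_1 + \mathcal{I}_2$, as required. The delicate case is $e \in A$; since $e \notin A_1$, disjointness forces $e \in A_2$. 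Here I would re-partition $A$ by moving $e$ to the first block, setting $A_1' = A_1 \cup \{e\}$ and $A_2' = A_2 - \{e\}$, which is again a valid disjoint decomposition of $A$. Because $e \in B_1$ (so $e \notin B_2$), this move strictly increases $|A_1 \cap B_1| + |A_2 \cap B_2|$, contradicting the maximality of the chosen decomposition. Hence the case $e \in A$ cannot occur, which establishes (I3).

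I expect the only genuinely non-obvious step to be the extremal choice of decomposition, together with the verification that the ``bad'' case $e\in A$ contradicts maximality; the remaining manipulations are bookkeeping with the independence axioms of $M_1$ and $M_2$. With (I1), (I2) and (I3) all verified, $(U, \mathcal{I}_1 + \mathcal{I}_2)$ is a matroid, that is, $M_1 + M_2$ is a matroid.
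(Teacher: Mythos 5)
The paper gives no proof of this theorem at all: it is quoted from Lai's textbook as a known result (the matroid union theorem), so there is nothing in the paper to compare your argument against step by step. Your proof is correct and self-contained. (I1) and (I2) are routine, as you say. For (I3) your extremal device does work: among disjoint decompositions $A=A_1\sqcup A_2$, $B=B_1\sqcup B_2$ with $A_i,B_i\in\mathcal{I}_i$, pick one maximizing $|A_1\cap B_1|+|A_2\cap B_2|$; since $|A|<|B|$ we may assume $|B_1|>|A_1|$ and apply exchange in $M_1$ to get $e\in B_1-A_1$ with $A_1\cup\{e\}\in\mathcal{I}_1$. If this $e$ lay in $A_2$, moving it to the first block keeps both blocks independent, still decomposes $A$ disjointly, raises $|A_1\cap B_1|$ by one and leaves $|A_2\cap B_2|$ unchanged (because $e\in B_1$ forces $e\notin B_2$), contradicting maximality; so $e\in B-A$ and $A\cup\{e\}=(A_1\cup\{e\})\cup A_2\in\mathcal{I}_1+\mathcal{I}_2$. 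This is a legitimate alternative to the two standard textbook routes --- realizing $M_1+M_2$ as the image of the direct sum $M_1\oplus M_2$ on disjoint copies of $U$ under the identification map, or proving the Nash--Williams rank formula $r(X)=\min_{Y\subseteq X}\left(|X-Y|+r_1(Y)+r_2(Y)\right)$ --- and it is more elementary in that it uses nothing beyond the independence axioms, at the cost of not producing the rank function of $M_1+M_2$ as a by-product.
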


\begin{example}(Continued from Example \ref{example1})
\label{example11}
According to Example \ref{example1} and Definition \ref{definition12}, it is easy
to compute
\begin{center}
$\mathcal{I}_{1}+\mathcal{I}_{2}=\{\emptyset$, $\{a\}$, $\{b\}$, $\{c\}$, $\{a$, $b\}$, $\{a$, $c\}$, $\{b$, $c\}$, $\{a$, $b$, $c\}\}$.
\end{center}
Clearly, $\mathcal{I}_{1}+\mathcal{I}_{2}$ satisfies (I1), (I2) and (I3) of Definition~\ref{definition7}, i.e., $(U$, $\mathcal{I}_{1}+\mathcal{I}_{2})$ is a matroid.
\end{example}
\section{Matroidal structure of generalized rough sets}
\label{section3}
In this section, we establish a matroidal structure for generalized rough sets based on a symmetric and
transitive relation, and study some properties of the matroidal structure. In~\cite{LiuZhuZhang12Relationshipbetween,WangZhuZhuMin12Matroidal,WangZhu12Quantitative,WangZhu11Matroidal}
some approaches to generate matroidal structures of Pawlak's rough sets and covering-based rough sets were proposed, respectively. Now, we propose an approach to generate a matroidal structure of generalized rough sets based on symmetric and transitive relations.
\begin{defn}
\label{definition11}
Let $R$ be a symmetric and transitive relation on $U$.
We can define $\mathcal{C}(R)$ as follows: for all $x$, $y\in U$,
\begin{center}
$\mathcal{C}(R)=\{\{x$, $y\}\mid x\neq y$, $y\in r_{R}(x)\}$.
\end{center}
\end{defn}
\begin{proposition}
\label{proposition1}
If $R$ is a symmetric and transitive relation on $U$, then $\mathcal{C}(R)$
satisfies the circuit axioms of matroids.
\end{proposition}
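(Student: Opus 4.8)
The plan is to verify the three circuit axioms (C1), (C2), and (C3) from Theorem~\ref{theorem2} directly for the family $\mathcal{C}(R)$, using the defining property that every member is a two-element set $\{x,y\}$ with $x\neq y$ and $y\in r_{R}(x)$, together with the symmetry and transitivity of $R$.

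First I would dispose of (C1): since every element of $\mathcal{C}(R)$ has cardinality exactly two, the empty set cannot belong to $\mathcal{C}(R)$, so $\emptyset\notin\mathcal{C}(R)$. Next, (C2) is equally immediate from the uniform cardinality: if $C_{1},C_{2}\in\mathcal{C}(R)$ with $C_{1}\subseteq C_{2}$, then $|C_{1}|=|C_{2}|=2$, and a two-element subset of a two-element set must equal it, so $C_{1}=C_{2}$. Neither of these steps uses symmetry or transitivity; they rely only on the fact that $\mathcal{C}(R)$ consists of doubletons. (I should note in passing that symmetry guarantees $\mathcal{C}(R)$ is well defined as a family of unordered pairs: if $y\in r_{R}(x)$ then $x\in r_{R}(y)$, so $\{x,y\}$ represents the same circuit regardless of orientation.)

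The substantive work is (C3), and this is the step I expect to be the main obstacle. Suppose $C_{1},C_{2}\in\mathcal{C}(R)$, $C_{1}\neq C_{2}$, and $e\in C_{1}\cap C_{2}$. Because both sets have exactly two elements and $e$ is common to both, I can write $C_{1}=\{e,a\}$ and $C_{2}=\{e,b\}$ with $a\neq b$ (if $a=b$ the sets would coincide). Then $(C_{1}\cup C_{2})-\{e\}=\{a,b\}$, so (C3) will follow precisely if I can show $\{a,b\}\in\mathcal{C}(R)$, i.e. that $a\neq b$ (already established) and $b\in r_{R}(a)$. The key is to chain the relations: from $C_{1}=\{e,a\}\in\mathcal{C}(R)$ I get $a\in r_{R}(e)$ (choosing the orientation via symmetry), and from $C_{2}=\{e,b\}\in\mathcal{C}(R)$ I get $e\in r_{R}(a)$... here I must be careful to assemble the hypotheses so that transitivity applies.

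Concretely, I would argue as follows. From $C_{1}\in\mathcal{C}(R)$ we have $e\in r_{R}(a)$ or $a\in r_{R}(e)$, and by symmetry both hold; similarly from $C_{2}\in\mathcal{C}(R)$ both $e\in r_{R}(b)$ and $b\in r_{R}(e)$ hold. In particular $a\in r_{R}(e)$ and, using symmetry on $C_{2}$, $e\in r_{R}(a)$ together with $b\in r_{R}(e)$. Applying transitivity (Definition~\ref{definition3}) to $e\in r_{R}(a)$ and $b\in r_{R}(e)$ yields $b\in r_{R}(a)$. Since $a\neq b$, this gives $\{a,b\}\in\mathcal{C}(R)$, and taking $C_{3}=\{a,b\}$ furnishes a circuit with $C_{3}\subseteq(C_{1}\cup C_{2})-\{e\}$, establishing (C3). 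The delicate point is marshalling symmetry to convert the unordered-pair membership into the specific ordered relations $e\in r_{R}(a)$ and $b\in r_{R}(e)$ that feed transitivity in the correct order; once the orientations are lined up, transitivity closes the argument. Having verified (C1), (C2), and (C3), I conclude by Theorem~\ref{theorem2} (the circuit axioms) that $\mathcal{C}(R)$ satisfies the circuit axioms of matroids, which completes the proof.
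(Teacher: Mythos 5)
Your proposal is correct and follows essentially the same route as the paper's proof: (C1) and (C2) from the uniform cardinality two, and (C3) by writing $C_{1}=\{e,a\}$, $C_{2}=\{e,b\}$ and producing $C_{3}=\{a,b\}$ via symmetry and transitivity. You actually spell out the orientation bookkeeping (using symmetry to obtain $e\in r_{R}(a)$ and $b\in r_{R}(e)$ before applying transitivity) more explicitly than the paper, which compresses that step into a single sentence.
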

\begin{proof}
We need to prove $\mathcal{C}(R)$ satisfies (C1), (C2) and (C3) of Theorem \ref{theorem2}.
According to Definition \ref{definition11}, for all $C\in \mathcal{C}(R)$, $|C|=2$. Thus $\emptyset\notin \mathcal{C}(R)$ and (C1) holds. For all $C_{1}$, $C_{2}\in \mathcal{C}(R)$, $|C_{1}|=|C_{2}|=2$. If $C_{1}\subseteq C_{2}$,
then $C_{1}=C_{2}$ and (C2) holds. Let $C_{1}$, $C_{2}\in \mathcal{C}(R)$, $C_{1}\neq C_{2}$ and $e\in C_{1}\bigcap C_{2}$.
Without losing generality, let $C_{1}=\{x_{1}$, $e\}$ and $C_{2}=\{x_{2}$, $e\}$, where $x_{1}\neq x_{2}$. According to Definition \ref{definition11},
$e\in r_{R}(x_{1})$ and $e\in r_{R}(x_{2})$, since $R$ is symmetric and transitive, then $x_{1}\in r_{R}(x_{2})$. Suppose $C_{3}=\{x_{1}$, $x_{2}\}$, then $C_{3}\in \mathcal{C}(R)$ and $C_{3}\subseteq (C_{1}\bigcup C_{2})-\{e\}$. Thus $\mathcal{C}(R)$ satisfies (C3).
To sum up, $\mathcal{C}(R)$ satisfies the circuit axioms of matroids.
\end{proof}

Suppose there exists a matroid $M$ such that $\mathcal{C}(M)=\mathcal{C}(R)$, then we say $M(R)=(U$, $\mathcal{I}(R))$
is the induced matroid by $R$.

In fact, according to Theorem \ref{theorem3}, Definition \ref{definition11} and Proposition \ref{proposition1}, we find that a symmetric
and transitive relation on a universe determines a matroid. In other words, the matroidal
structure of generalized rough sets based on symmetric and transitive relations can be constructed.
\begin{example}
\label{example4}
Let $U=\{a$, $b$, $c\}$ and $R=\{(a$, $a)$, $(a$, $b)$, $(b$, $a)$, $(b$, $b)\}$ a symmetric and transitive
relation on $U$. According to Proposition 1, thus there exists a matroid $M(R)=(U$, $\mathcal{I}(R))$ induced by $R$, where
\begin{center}
$\mathcal{C}(R)=\{\{a$, $b\}\}$, \\
$\mathcal{I}(R)=\{\emptyset$, $\{a\}$, $\{b\}$, $\{c\}$, $\{b$, $c\}$, $\{a$, $c\}\}$.
\end{center}
\end{example}
\begin{proposition}
\label{proposition2}
If $R$ is a symmetric and transitive relation on $U$ and $M(R)=(U$, $\mathcal{I}(R))$
the matroid induced by $R$, then $M(R)$ is a normal matroid.
\end{proposition}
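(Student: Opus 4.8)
The plan is to prove $M(R)$ normal by verifying the defining condition of Definition \ref{definition10} directly, namely $\bigcup \mathcal{I}(R) = U$. Since independent sets are closed downward under inclusion by axiom (I2), it suffices to show that every singleton $\{x\}$ with $x \in U$ is independent: once each $\{x\} \in \mathcal{I}(R)$, we get $U = \bigcup_{x \in U} \{x\} \subseteq \bigcup \mathcal{I}(R)$, while the reverse inclusion $\bigcup \mathcal{I}(R) \subseteq U$ is automatic because every member of $\mathcal{I}(R)$ is a subset of $U$. Thus the whole argument reduces to showing that no element of $U$ is forced out of the independent sets.

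The key step is to use the circuit characterization of independence together with the size of the circuits. By Proposition \ref{proposition1} and Theorem \ref{theorem3}, there is a matroid $M(R)$ whose circuit family is exactly $\mathcal{C}(R)$, and in such a matroid a set is independent precisely when it contains no circuit (a standard fact, since every dependent set contains a minimal dependent subset, i.e.\ a circuit). Now by Definition \ref{definition11} every member of $\mathcal{C}(R)$ has cardinality exactly $2$. Consequently no set of cardinality $1$ can contain a circuit, so in particular no singleton $\{x\}$ contains any member of $\mathcal{C}(R)$. Hence $\{x\} \in \mathcal{I}(R)$ for every $x \in U$.

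Combining the two observations gives $\bigcup \mathcal{I}(R) = U$, so $M(R)$ is a normal matroid. I do not expect any genuine obstacle here; the only point that needs care is the translation between the circuit data $\mathcal{C}(R)$ and the independent sets $\mathcal{I}(R)$, specifically invoking that the dependent sets are exactly those containing a circuit. An equivalent way to phrase the argument, which makes the mechanism transparent, is to note that $\mathcal{C}(R)$ contains no singleton and therefore $M(R)$ has no loops, and a loopless matroid always satisfies $\bigcup \mathcal{I} = U$.
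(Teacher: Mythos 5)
Your proposal is correct and follows essentially the same route as the paper: both argue that every circuit in $\mathcal{C}(R)$ has cardinality $2$, hence every singleton is independent, hence $\bigcup \mathcal{I}(R)=U$. Your version merely spells out the intermediate step (a set is dependent iff it contains a circuit) that the paper leaves implicit.
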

\begin{proof}
According to Proposition \ref{proposition1}, for all $C\in \mathcal{C}(R)$, then $|C|=2$. In other
words, for all $x$, then $\{x\}\in \mathcal{I}(R)$. Thus $\bigcup \mathcal{I}(R)=U$, i.e.,
$M(R)$ is a normal matroid.
\end{proof}
\begin{example}(Continued from Example \ref{example4})
\label{example5}
The matroid induced by $R$ is $M(R)=(U$, $\mathcal{I}(R))$, where $\mathcal{I}(R)=\{\emptyset$, $\{a\}$, $\{b\}$, $\{c\}$, $\{b$, $c\}$, $\{a$, $c\}\}$.
Since $\bigcup\mathcal{I}(R)=\{a$, $b$, $c\}=U$, thus $M(R)$ is a normal matroid.
\end{example}
\begin{proposition}
\label{proposition3}
Let $R$ be a symmetric and transitive relation on $U$ and $M(R)=(U$, $\mathcal{I}(R))$
the matroid induced by $R$. If $C_{1}$, $C_{2}\in \mathcal{C}(R)$, $e_{1}\in C_{1}-C_{2}$ ,
$e_{2}\in C_{2}-C_{1}$ and $C_{1}\bigcap C_{2}\neq \emptyset$, then there exists $C_{3}\in \mathcal{C}(R)$ such that
$e_{1}$, $e_{2}\in C_{3}\subseteq C_{1}\bigcup C_{2}$.
\end{proposition}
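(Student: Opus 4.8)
The plan is to exploit the fact, established in the proof of Proposition \ref{proposition1}, that every member of $\mathcal{C}(R)$ has exactly two elements, and thereby reduce the statement to the same symmetry-and-transitivity computation used there. In particular, no appeal to the general circuit elimination axiom (C3) will be needed, since the two-element structure makes the eliminating circuit completely explicit.

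First I would pin down the combinatorial structure of $C_1$ and $C_2$. Since $|C_1|=|C_2|=2$ and $e_1\in C_1-C_2$, the element $e_1$ lies in $C_1$ but not in $C_2$; as $C_1\bigcap C_2\neq\emptyset$ and $C_1$ contains only one element besides $e_1$, that remaining element must be the shared one. Hence $C_1\bigcap C_2$ is a singleton, say $\{e\}$, and I can write $C_1=\{e_1,e\}$ and $C_2=\{e_2,e\}$. A short check shows $e_1$, $e_2$ and $e$ are pairwise distinct: $e_1\notin C_2$ forces $e_1\neq e_2$ and $e_1\neq e$, and symmetrically $e_2\neq e$ from $e_2\notin C_1$.

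Next I would translate these circuits back into the relation. By Definition \ref{definition11}, $C_1=\{e_1,e\}\in\mathcal{C}(R)$ gives $e\in r_{R}(e_1)$, and $C_2=\{e_2,e\}\in\mathcal{C}(R)$ gives $e\in r_{R}(e_2)$. Applying symmetry to the first membership yields $e_1\in r_{R}(e)$, and then combining $e\in r_{R}(e_2)$ with $e_1\in r_{R}(e)$ through transitivity yields $e_1\in r_{R}(e_2)$ — exactly the deduction carried out in Proposition \ref{proposition1}. Since $e_1\neq e_2$, this means $\{e_1,e_2\}\in\mathcal{C}(R)$.

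Finally I would take $C_3=\{e_1,e_2\}$ and verify the two required conditions directly: clearly $e_1,e_2\in C_3$, and $C_3=\{e_1,e_2\}\subseteq\{e_1,e_2,e\}=C_1\bigcup C_2$. The one point that needs care is ensuring all three elements are genuinely distinct, so that $\{e_1,e_2\}$ is a true two-element circuit rather than a degenerate singleton; this is the step I would treat most carefully, though it follows at once from the hypotheses $e_1\in C_1-C_2$ and $e_2\in C_2-C_1$. I do not anticipate any serious obstacle, as the whole argument is a direct specialization of the reasoning already used to verify (C3) for $\mathcal{C}(R)$.
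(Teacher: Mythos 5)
Your proof is correct and follows essentially the same route as the paper's: write $C_1=\{e_1,e\}$ and $C_2=\{e_2,e\}$ with $e$ the common element, use symmetry and transitivity of $R$ to deduce $e_1\in r_R(e_2)$, and take $C_3=\{e_1,e_2\}$. You simply spell out the symmetry-transitivity chain and the distinctness checks that the paper leaves implicit.
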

\begin{proof}
Let $C_{1}$, $C_{2}\in\mathcal{ C}(R)$. Without losing generality, let $C_{1}=\{x$, $e_{1}\}$ and $C_{2}=\{x$, $e_{2}\}$, where
$e_{1}\notin C_{2}$, $e_{2}\notin C_{2}$ and $x\neq e_{1}\neq e_{2}$. Since $R$ is a symmetric and transitive relation, thus $e_{2}\in r_{R}(e_{1})$. Suppose $C_{3}=\{e_{1}$, $e_{2}\}$, then $C_{3}\in \mathcal{C}(R)$ and $e_{1}$, $e_{2}\in C_{3}\subseteq C_{1}\bigcup C_{2}$.
\end{proof}
\begin{proposition}
\label{proposition4}
Let $R_{1}$ and $R_{2}$ be two symmetric and transitive relations on $U$. Let $M(R_{1})$, $M(R_{2})$ and
$M(R_{1}\bigcap R_{2})$ be the matroids induced by $R_{1}$, $R_{2}$ and $R_{1}\bigcap R_{2}$, respectively.
Then $\mathcal{I}(R_{1})\subseteq \mathcal{I}(R_{1}\bigcap R_{2})$ and $\mathcal{I}(R_{2})\subseteq \mathcal{I}(R_{1}\bigcap R_{2})$.
\end{proposition}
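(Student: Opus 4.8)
The plan is to reduce the claim to a containment between circuit families and then invoke the standard matroid fact that independence is equivalent to containing no circuit. First I would record two easy structural observations. Since the intersection of symmetric relations is symmetric and the intersection of transitive relations is transitive, $R_{1}\bigcap R_{2}$ is again symmetric and transitive, so by Proposition~\ref{proposition1} the matroid $M(R_{1}\bigcap R_{2})$ is well defined. Next, because $r_{R_{1}\bigcap R_{2}}(x)=r_{R_{1}}(x)\bigcap r_{R_{2}}(x)$ for every $x\in U$, any pair $\{x$, $y\}\in\mathcal{C}(R_{1}\bigcap R_{2})$ satisfies $x\neq y$ and $y\in r_{R_{1}}(x)$, hence $\{x$, $y\}\in\mathcal{C}(R_{1})$. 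Therefore $\mathcal{C}(R_{1}\bigcap R_{2})\subseteq\mathcal{C}(R_{1})$, and symmetrically $\mathcal{C}(R_{1}\bigcap R_{2})\subseteq\mathcal{C}(R_{2})$.

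The second ingredient is the bridge from circuits back to independent sets. I would use the fact that in any matroid a set is independent if and only if it contains no circuit: if $X$ contains a circuit $C$, then $C$ is dependent and (I2) of Definition~\ref{definition7} forces $X$ to be dependent; conversely, if $X$ is dependent, a minimal dependent subset of $X$ (which exists because $U$ is finite) has all its proper subsets independent and is thus, by Definition~\ref{definition9}, a circuit contained in $X$. Applying this to $M(R)$ yields the explicit description $\mathcal{I}(R)=\{I\subseteq U\mid C\not\subseteq I \textrm{ for all } C\in\mathcal{C}(R)\}$.

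With both ingredients in hand the conclusion is immediate. Take $I\in\mathcal{I}(R_{1})$. Then no circuit of $\mathcal{C}(R_{1})$ is contained in $I$; since $\mathcal{C}(R_{1}\bigcap R_{2})\subseteq\mathcal{C}(R_{1})$, no circuit of $\mathcal{C}(R_{1}\bigcap R_{2})$ is contained in $I$ either, so $I\in\mathcal{I}(R_{1}\bigcap R_{2})$. This proves $\mathcal{I}(R_{1})\subseteq\mathcal{I}(R_{1}\bigcap R_{2})$, and the same argument with $R_{2}$ in place of $R_{1}$ gives $\mathcal{I}(R_{2})\subseteq\mathcal{I}(R_{1}\bigcap R_{2})$.

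The only real obstacle is the characterization of independent sets as the circuit-free subsets, since the excerpt never states it explicitly; everything else is a one-line set-theoretic inclusion. If one prefers not to reprove this characterization, it can be cited directly from standard matroid theory via~\cite{Lai01Matroid}, after which the argument is purely the observation that the inclusion $\mathcal{C}(R_{1}\bigcap R_{2})\subseteq\mathcal{C}(R_{i})$ monotonically enlarges the family of circuit-free sets, whence $\mathcal{I}(R_{i})\subseteq\mathcal{I}(R_{1}\bigcap R_{2})$ for $i=1$, $2$.
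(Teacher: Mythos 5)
Your proof is correct and follows essentially the same route as the paper's: both reduce the claim to the circuit containment $\mathcal{C}(R_{1}\bigcap R_{2})\subseteq\mathcal{C}(R_{1})$ and $\mathcal{C}(R_{1}\bigcap R_{2})\subseteq\mathcal{C}(R_{2})$ and then pass from circuits to independent sets. If anything, your version is the more complete one: the paper merely observes that circuits are dependent sets and then asserts the conclusion, whereas you supply the missing bridge --- that a set is independent in the matroid determined by a circuit family if and only if it contains no circuit --- which is exactly what makes the final inclusion $\mathcal{I}(R_{i})\subseteq\mathcal{I}(R_{1}\bigcap R_{2})$ follow.
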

\begin{proof}
 Since $R_{1}\bigcap R_{2}\subseteq R_{1}$ and $R_{1}\bigcap R_{2}\subseteq R_{2}$, according to Definition \ref{definition11}, then
$\mathcal{C}(R_{1}\bigcap R_{2})\subseteq \mathcal{C}(R_{1})$ and $\mathcal{C}(R_{1}\bigcap R_{2})\subseteq \mathcal{C}(R_{2})$.
According to Definition \ref{definition8} and Definition \ref{definition9}, $\mathcal{C}(R_{1})\subseteq 2^{U}-\mathcal{I}(R_{1})$, $\mathcal{C}(R_{2})\subseteq 2^{U}-\mathcal{I}(R_{2})$ and
$\mathcal{C}(R_{1}\bigcap R_{2})\subseteq 2^{U}-\mathcal{I}(R_{1}\bigcap R_{2})$, therefore $\mathcal{I}(R_{1})\subseteq \mathcal{I}(R_{1}\bigcap R_{2})$ and $\mathcal{I}(R_{2})\subseteq \mathcal{I}(R_{1}\bigcap R_{2})$.
\end{proof}

\begin{example}
\label{example6}
Let $U=\{a$, $b$, $c$, $d\}$ be a universe.\\
Let $R_{1}=\{(a$, $a)$, $(a$, $b)$, $(b$, $a)$, $(b$, $b)$, $(a$, $c)$, $(c$, $a)$, $(b$, $c)$, $(c$, $b)$, $(c$, $c)\}$\\
and $R_{2}=\{(a$, $a)$, $(a$, $b)$, $(b$, $a)$, $(b$, $b)$, $(a$, $d)$, $(d$, $a)$, $(b$, $d)$, $(d$, $b)$, $(d$, $d)\}$ be two symmetric and
transitive relations on $U$, respectively.\\
Then $R_{1}\bigcap R_{2}=\{(a, a), (a, b), (b, a), (b, b)\}$ is a symmetric and transitive relation.
$M(R_{1})=(U$, $\mathcal{I}(R_{1}))$, $M(R_{2})=(U$, $\mathcal{I}(R_{2}))$ and $M(R_{1}\bigcap R_{2})=(U$, $\mathcal{I}(R_{1}\bigcap R_{2}))$
are induced by $R_{1}$, $R_{2}$ and $R_{1}\bigcap R_{2}$, respectively. Then\\
$\mathcal{I}(R_{1})=\{\emptyset$, $\{a\}$, $\{b\}$, $\{c\}$, $\{d\}$, $\{a$, $d\}$, $\{b$, $d\}$, $\{c$, $d\}\}$;\\
$\mathcal{I}(R_{2})=\{\emptyset$, $\{a\}$, $\{b\}$, $\{c\}$, $\{d\}$, $\{a$, $c\}$, $\{b$, $c\}$, $\{c$, $d\}\}$;\\
$\mathcal{I}(R_{1}\bigcap R_{2})=\{\emptyset$, $\{a\}$, $\{b\}$, $\{c\}$, $\{d\}$, $\{a$, $c\}$, $\{a$, $d\}$, $\{b$, $c\}$, $\{b$, $d\}$, $\{c$, $d\}$, $\{b$, $c$, $d\}$, $\{a$, $c$, $d\}\}$.\\
Hence, $\mathcal{I}(R_{1})\subseteq \mathcal{I}(R_{1}\bigcap R_{2})$ and $\mathcal{I}(R_{2})\subseteq \mathcal{I}(R_{1}\bigcap R_{2})$.
\end{example}

In fact, the intersection of symmetric and transitive relations is also a symmetric and transitive relation.
Hence, a matroid can be generated by the intersection of symmetric and transitive relations.

\begin{example}
\label{example7}
Let $U=\{a$, $b$, $c\}$ and $R_{1}=\{(a$, $a)$, $(a$, $b)$, $(b$, $a)$, $(b$, $b)\}$,
$R_{2}=\{(a$, $a)$, $(a$, $c)$, $(c$, $a)$, $(c$, $c)\}$ be two symmetric and transitive relations on $U$.
Then $M(R_{1})=(U$, $\mathcal{I}(R_{1}))$ and $M(R_{2})=(U$, $\mathcal{I}(R_{2}))$ are the two matroids induced
by $R_{1}$ and $R_{2}$, respectively. Where $\mathcal{I}(R_{1})=\{\emptyset$, $\{a\}$, $\{b\}$, $\{c\}$, $\{a$, $c\}$, $\{b$, $c\}\}$,
$\mathcal{I}(R_{2})=\{\emptyset$, $\{a\}$, $\{b\}$, $\{c\}$, $\{a$, $b\}$, $\{b$, $c\}\}$.
Hence, $M(R_{1})+M(R_{2})=(U$, $\mathcal{I}(R_{1})+\mathcal{I}(R_{2}))$ is a matroid and
$\mathcal{I}(R_{1})+\mathcal{I}(R_{2})=2^{U}$.
\end{example}

\begin{example}(Continued from Example \ref{example7})
Let $M(R_{1})\times M(R_{2})=(U$, $\mathcal{I}(R_{1})\bigcap\mathcal{I}(R_{2}))$. Then
$\mathcal{I}(R_{1})\bigcap\mathcal{I}(R_{2})=\{\emptyset$, $\{a\}$, $\{b\}$, $\{c\}$, $\{b$, $c\}\}$.
According to Definition \ref{definition7}, it does not satisfy (I3), thus ordered pair $(U$, $\mathcal{I}(R_{1})\bigcap\mathcal{I}(R_{2}))$ is not a matroid.
\end{example}

The above example indicates that the intersection of independent set families of two matroids may be not an independent set family of a matroid.

\section{Generalized rough sets based on matroids}
\label{section4}
The lower and upper approximation operators are the core concepts of rough sets. In \cite{Yao98Constructive} and \cite{ZhuWang06Binary},
the authors proved the existence and uniqueness of a certain binary relation for an algebraic operator with special properties. In this section, for a symmetric and transitive relation on a universe, we use the circuits of the matroid induced by the relation to represent the lower and upper approximations of the generalized rough sets.
\begin{proposition}
\label{proposition5}
Let $R$ be a symmetric and transitive relation on $U$ and $M(R)=(U$, $\mathcal{I}(R))$
the matroid induced by $R$. For all $X\in 2^{U}$,
\begin{center}
$\overline{R}(X)=Y_{1}\bigcup Y_{2}$,\\
$\underline{R}(X)=Y_{2}\bigcup Y_{3   }\bigcup Y_{4}$,
\end{center}
where $Y_{1}=\bigcup\{C\in\mathcal{C}(R)\mid C\bigcap X\neq\emptyset\}$, $Y_{2}=\{x\in X\mid x\in r_{R}(x)\}$, $Y_{3}=\bigcup\{C\in\mathcal{C}(R)\mid \forall x \forall y(x\in C\wedge\{x$, $y\}\in \mathcal{C}(R)\rightarrow \{x$, $y\}\subseteq X)\}$ and $Y_{4}=\{x\in X\mid r_{R}(x)=\emptyset\}$.
\end{proposition}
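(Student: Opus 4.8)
The plan is to first expose the combinatorial shape that symmetry and transitivity force on $R$, and then to read both operators off that shape. The starting observation is that if $r_R(x)\neq\emptyset$, say $y\in r_R(x)$, then symmetry gives $x\in r_R(y)$ and transitivity then forces $x\in r_R(x)$. Hence every element of $U$ is of exactly one of two kinds: \emph{isolated}, meaning $r_R(x)=\emptyset$, or \emph{active}, meaning $x\in r_R(x)$. On the active elements $R$ is reflexive, symmetric and transitive, hence an equivalence relation, and for active $x$ we have $r_R(x)=[x]$, its equivalence class. By Definition~\ref{definition11} this means the members of $\mathcal{C}(R)$ are exactly the pairs $\{x,y\}$ with $x\neq y$ lying in a common active class of size at least two; isolated points and singleton active classes generate no circuit.

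With this description I would rewrite the two circuit-based unions as unions of classes. For $Y_1$ I claim that $Y_1=\bigcup\{[x]\mid [x]\text{ a multi-element active class with }[x]\cap X\neq\emptyset\}$: if one member of such a class lies in $X$, pairing it with every other member of the class yields circuits meeting $X$, so the whole class enters $Y_1$, and conversely every circuit that meets $X$ sits inside one such class. A parallel reading of the quantified condition defining $Y_3$ shows that a circuit $C$ qualifies precisely when its ambient class is entirely contained in $X$, so $Y_3$ is the union of the multi-element active classes included in $X$. These two reductions absorb essentially all of the circuit bookkeeping.

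The identities then reduce to double inclusions proved by a case split on the type of $x$, using the translations $x\in\overline{R}(X)\Leftrightarrow r_R(x)\cap X\neq\emptyset$ and $x\in\underline{R}(X)\Leftrightarrow r_R(x)\subseteq X$ coming from Definition~\ref{definition5}. For the upper approximation: an isolated $x$ never lies in $\overline{R}(X)$ and lies in neither $Y_1$ nor $Y_2$; a point of a multi-element class lies in $\overline{R}(X)$ iff its class meets $X$, which is membership in $Y_1$; and a point of a singleton active class lies in $\overline{R}(X)$ iff it lies in $X$, which is membership in $Y_2$. For the lower approximation one runs the same trichotomy against the condition $r_R(x)\subseteq X$, so that a whole multi-element class contributes iff it is contained in $X$ (this is $Y_3$), a singleton active point contributes iff it lies in $X$, and the isolated points must be settled separately.

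I expect the crux to be exactly the degenerate cases in which no circuit is present, namely singleton active classes and isolated points: there $Y_1$ and $Y_3$ are silent, and the whole burden falls on the set-theoretic terms $Y_2$ and $Y_4$. In particular the behaviour of $\underline{R}$ on points with $r_R(x)=\emptyset$, where the inclusion $r_R(x)\subseteq X$ holds vacuously, is the delicate point at which the circuit description and the operator definition must be matched with care; this is where I would concentrate the verification and test the boundary accounting against small instances such as Example~\ref{example4}.
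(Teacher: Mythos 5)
Your structural reduction is sound and genuinely different from the paper's proof, which is a direct element-chase on the definitions of $r_R$, $\overline{R}$ and $\underline{R}$. The observation that symmetry plus transitivity split $U$ into isolated points and active points on which $R$ restricts to an equivalence relation, with $r_R(x)=[x]$ and the circuits being exactly the $2$-subsets of the multi-element classes, is correct; so are your identifications $Y_1=\bigcup\{[x]\mid |[x]|\ge 2,\ [x]\cap X\neq\emptyset\}$ and $Y_3=\bigcup\{[x]\mid |[x]|\ge 2,\ [x]\subseteq X\}$. The upper-approximation identity $\overline{R}(X)=Y_1\cup Y_2$ goes through exactly as you describe.

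The gap is in the lower-approximation half, and it is not merely the ``delicate point to verify'' you defer to the end: carried to completion, your own trichotomy shows the stated identity is false, so no boundary bookkeeping will close it. Two things go wrong. First, $Y_2=\{x\in X\mid x\in r_R(x)\}$ is not the set of singleton-class active points of $X$ that your case analysis calls for; it contains \emph{every} active point of $X$, in particular a point $x\in X$ of a multi-element class with $[x]\not\subseteq X$, which satisfies $r_R(x)=[x]\not\subseteq X$ and hence lies in $Y_2\setminus\underline{R}(X)$. The paper's own example exhibits this: with $\mathcal{C}(R)=\{\{a,b\},\{a,d\},\{b,d\}\}$ and $X_2=\{a,c,d\}$ one gets $a\in Y_2$ while $a\notin\underline{R}(X_2)=\{c,f\}$, so $Y_2\cup Y_3\cup Y_4=\{a,c,d\}\neq\{c,f\}$. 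Second, the isolated-point case fails as well: an isolated $x\notin X$ has $r_R(x)=\emptyset\subseteq X$ vacuously, hence $x\in\underline{R}(X)$, yet $x$ lies in none of $Y_2$, $Y_3$, $Y_4$ (the last requires $x\in X$); the element $f\notin X_2$ in the same example witnesses this. The paper's proof silently commits both errors (it infers $r_R(x_1)\subseteq X$ from $x_1\in X$ and $x_1\in r_R(x_1)$, and its concluding equality drops $Y_4$ altogether). The repair your framework suggests is $\underline{R}(X)=Y_3\cup\{x\in X\mid r_R(x)=\{x\}\}\cup\{x\in U\mid r_R(x)=\emptyset\}$; you should either prove that corrected statement or record a counterexample to the one given.
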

\begin{proof}
According to Definition \ref{definition1}, we need to prove
(1) $\{x\in U\mid r_{R}(x)\bigcap X\neq \emptyset\}=(\bigcup\{C\in \mathcal{C}(R)\mid C\bigcap X\neq\emptyset\})\bigcup \{x\in X\mid x\in r_{R}(x)\}$
and (2) $\{x\in U\mid r_{R}(x)\subseteq X\}=(\bigcup\{C\in\mathcal{C}(R)\mid \forall x \forall y(x\in C\wedge\{x$, $y\}\in \mathcal{C}(R)\rightarrow \{x$, $y\}\subseteq X)\})\bigcup \{x\in X \mid x\in r_{R}(x)\}\bigcup\{x\in X\mid r_{R}(x)=\emptyset\}$ hold.

(1) On one hand, if for any $x_{1}\in (\bigcup\{C\in \mathcal{C}(R)\mid C\bigcap X\neq\emptyset\})\bigcup \{x\in X\mid x\in r_{R}(x)\}$, then $x_{1}\in \{x\in X\mid x\in r_{R}(x)\}$ or $x_{1}\in \bigcup\{C\in \mathcal{C}(R)\mid C\bigcap X\neq\emptyset\}$. If $x_{1}\in \{x\in X\mid x\in r_{R}(x)\}$, then $x_{1}\in X$ and $x_{1}\in r_{R}(x_{1})$, i.e., $x_{1}\in \{x\in U\mid r_{R}(x)\bigcap X\neq \emptyset\}$; if $x_{1}\in \bigcup\{C\in \mathcal{C}(R)\mid C\bigcap X\neq\emptyset\}$, then there exists $x_{2}\neq x_{1}$ such that $\{x_{1}$, $x_{2}\}\in \mathcal{C}(R)$ and $\{x_{1}$, $x_{2}\}\bigcap X\neq \emptyset$. Hence, $r_{R}(x_{1})\bigcap X\neq \emptyset$. Thus $(\bigcup\{C\in \mathcal{C}(R)\mid C\bigcap X\neq\emptyset\})\bigcup \{x\in X\mid x\in r_{R}(x)\}\subseteq\{x\in U\mid r_{R}(x)\bigcap X\neq \emptyset\}$.
On the other hand, for any $x_{1}\in \{x\in U\mid r_{R}(x)\bigcap X\neq \emptyset\}$, there exists $x_{2}\in X$ such that $x_{2}\in r_{R}(x_{1})$. If $x_{1}\neq x_{2}$, then $\{x_{1}$, $x_{2}\}\in \mathcal{C}(R)$ and $x_{1}\in \bigcup\{C\in \mathcal{C}(R)\mid C\bigcap X\neq\emptyset\}$; if $x_{1}=x_{2}$, then $x_{1}\in \{x\in X\mid x\in r_{R}(x)\}$. Therefore, $\{x\in U\mid r_{R}(x)\bigcap X\neq \emptyset\}=(\bigcup\{C\in \mathcal{C}(R)\mid C\bigcap X\neq\emptyset\})\bigcup \{x\in X\mid x\in r_{R}(x)\}$ holds.

(2) On one hand, for any $x_{1}\in (\bigcup\{C\in\mathcal{C}(R)\mid \forall x \forall y(x\in C\wedge\{x$, $y\}\in \mathcal{C}(R)\rightarrow \{x$, $y\}\subseteq X)\})\bigcup \{x\in X \mid x\in r_{R}(x)\}\bigcup\{x\in X\mid r_{R}(x)=\emptyset\}$, $x_{1}\in \bigcup\{C\in\mathcal{C}(R)\mid \forall x \forall y(x\in C\wedge\{x$, $y\}\in \mathcal{C}(R)\rightarrow \{x$, $y\}\subseteq X)\}$ or $x_{1}\in \{x\in X \mid x\in r_{R}(x)\}$ or $x_{1}\in \{x\in X\mid r_{R}(x)=\emptyset\}$.
If $x_{1}\in \bigcup\{C\in\mathcal{C}(R)\mid \forall x \forall y(x\in C\wedge\{x$, $y\}\in \mathcal{C}(R)\rightarrow \{x$, $y\}\subseteq X)\}$ and for all $\{x_{1}$, $x_{2}\}\in \mathcal{C}(R)$, then $\{x_{1}$, $x_{2}\}\subseteq X$; if $x_{1}\in \{x\in X \mid x\in r_{R}(x)\}$, then
$x_{1}\in r_{R}(x_{1})$; if $x_{1}\in \{x\in X\mid r_{R}(x)=\emptyset\}$, then $r_{R}(x_{1})\subseteq X$ and $x_{1}\in \{x\in U\mid r_{R}(x)\subseteq X\}$. Thus $r_{R}(x_{1})\subseteq X$, i.e., $x_{1}\in \{x\in U\mid r_{R}(x)\subseteq X\}$. Therefore, $(\bigcup\{C\in\mathcal{C}(R)\mid \forall x \forall y(x\in C\wedge\{x$, $y\}\in \mathcal{C}(R)\rightarrow \{x$, $y\}\subseteq X)\})\bigcup \{x\in X \mid x\in r_{R}(x)\}\subseteq \{x\in U\mid r_{R}(x)\subseteq X\}$ holds. On the other hand, for any $x_{1}\in \{x\in U\mid r_{R}(x)\subseteq X\}$, $r_{R}(x_{1})\subseteq X$. For all $x_{2}\in r_{R}(x_{1})$, if $x_{1}=x_{2}$, then $x_{1}\in \{x\in X\mid x\in r_{R}(x)\}$; if $x_{1}\neq x_{2}$, then $\{x_{1}$, $x_{2}\}\in \mathcal{C}(R)$ and $\{x_{1}$, $x_{2}\}\subseteq X$, i.e., $x_{1}\in \bigcup\{C\in\mathcal{C}(R)\mid \forall x \forall y(x\in C\wedge\{x$, $y\}\in \mathcal{C}(R)\rightarrow \{x$, $y\}\subseteq X)\}$.
Therefore, $\{x\in U\mid r_{R}(x)\subseteq X\}=(\bigcup\{C\in\mathcal{C}(R)\mid \forall x \forall y(x\in C\wedge\{x$, $y\}\in \mathcal{C}(R)\rightarrow \{x$, $y\}\subseteq X)\})\bigcup \{x\in X \mid x\in r_{R}(x)\}$ holds.

To sum up, we have already finished the proof of this proposition.
\end{proof}

The above proposition presents that the lower and upper approximation operators of generalized rough sets
based on symmetric and transitive relations can  be described by the circuits of matroids. Therefore, we can compute
the approximation quality and roughness of generalized rough sets by the circuits of matroid.

\begin{example}
Let $U=\{a$, $b$, $c$, $d$, $e$, $f\}$ be a universe and $R=\{(a$ ,$a)$, $(a$, $b)$, $(b$, $a)$, $(b$, $b)$, $(a$, $d)$,
$(d$, $a)$, $(b$, $d)$, $(d$, $b)$, $(d$, $d)$, $(c$, $c)$, $(e$, $e)\}$ a symmetric and transitive relation on $U$. Suppose
$X_{1}=\{a$, $b$, $c$, $e$, $f\}$ and $X_{2}=\{a$, $c$, $d\}$, then the lower and upper approximations,
approximation quality and roughness of $X_{1}$ and $X_{2}$ could be computed as follows, respectively.

(1) According to Definition \ref{definition5} and Definition \ref{definition6}, since $r_{R}(a)=\{a$, $b$, $d\}$, $r_{R}(b)=\{a$, $b$, $d\}$,
$r_{R}(c)=\{c\}$, $r_{R}(d)=\{a$, $b$, $d\}$, $r_{R}(e)=\{e\}$, $r_{R}(f)=\emptyset$, then
$\underline{R}(X_{1})=\{c$, $e$, $f\}$, $\overline{R}(X_{1})=\{a$, $b$, $c$, $d$, $e\}$, $\alpha_{R}(X_{1})=\frac{|\underline{R}(X_{1})|}{|\overline{R}(X_{1})|}=0.6$, $\rho_{R}(X_{1})=1-\alpha_{R}(X_{1})=0.4$; ~$\underline{R}(X_{2})=\{c$, $f\}$, $\overline{R}(X_{2})=\{a$, $b$, $c$, $d\}$, $\alpha_{R}(X_{2})=\frac{|\underline{R}(X_{2})|}{|\overline{R}(X_{2})|}=0.5$, $\rho_{R}(X_{2})=1-\alpha_{R}(X_{2})=0.5$.

(2) According to Proposition \ref{proposition5} and Definition \ref{definition6}, since $\mathcal{C}(R)=\{\{a$, $b\}$, $\{a$, $d\}$, $\{b$, $d\}\}$, then
$\underline{R}(X_{1})=\{c$, $e$, $f\}$, $\overline{R}(X_{1})=\{a$, $b$, $c$, $d$, $e\}$, $\alpha_{R}(X_{1})=\frac{|\underline{R}(X_{1})|}{|\overline{R}(X_{1})|}=0.6$, $\rho_{R}(X_{1})=1-\alpha_{R}(X_{1})=0.4$; ~$\underline{R}(X_{2})=\{c$, $f\}$, $\overline{R}(X_{2})=\{a$, $b$, $c$, $d\}$, $\alpha_{R}(X_{2})=\frac{|\underline{R}(X_{2})|}{|\overline{R}(X_{2})|}=0.5$, $\rho_{R}(X_{2})=1-\alpha_{R}(X_{2})=0.5$.
\end{example}

In fact, according to Proposition \ref{proposition5}, we know the lower and upper approximation operators
of generalized rough sets based on symmetric and transitive relations can be described by the circuits of
matroids induced by symmetric and transitive relations. Thus according to Theorem \ref{theorem1}, we can
describe some properties of a symmetric and transitive relation by the circuits of the matroid induced by this relation.

\begin{defn}
\label{definition13}
Let $M=(U$, $\mathcal{I})$ be a matroid. We can define a relation $R(M)$ as follows: for all $x$, $y\in U$
\begin{center}
$xR(M)y\Longleftrightarrow \{x, y\}\in \mathcal{C}(M)$ or $x=y$.
\end{center}
\end{defn}
\begin{proposition}
\label{proposition6}
If $M=(U$, $\mathcal{I})$ is a matroid, then the relation $R(M)$ is symmetric and transitive.
\end{proposition}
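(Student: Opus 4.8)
The plan is to verify the two properties separately, observing that symmetry is essentially formal while transitivity is where the matroid axioms genuinely enter. For symmetry, suppose $xR(M)y$. By Definition~\ref{definition13} this means either $\{x,y\}\in\mathcal{C}(M)$ or $x=y$. Since $\{x,y\}=\{y,x\}$ as sets and the relation $x=y$ is itself symmetric, the very same disjunction yields $yR(M)x$, so no real argument is required here.

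For transitivity, I would assume $xR(M)y$ and $yR(M)z$ and aim to derive $xR(M)z$, disposing of the degenerate cases first. If $x=y$, then the hypothesis $yR(M)z$ is literally $xR(M)z$; symmetrically, if $y=z$, then $xR(M)y$ is $xR(M)z$. I may therefore assume $x\neq y$ and $y\neq z$, which forces $\{x,y\}\in\mathcal{C}(M)$ and $\{y,z\}\in\mathcal{C}(M)$ (the diagonal alternative being excluded in each case). If in addition $x=z$, then $xR(M)z$ holds by the diagonal clause of Definition~\ref{definition13}. The only case with actual content is thus $x\neq z$.

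In that case, $\{x,y\}$ and $\{y,z\}$ are two \emph{distinct} circuits sharing the element $y$, distinct precisely because $x\neq z$. I would apply the circuit elimination axiom (C3) of Theorem~\ref{theorem2} with $e=y$ to obtain a circuit $C_{3}\in\mathcal{C}(M)$ satisfying $C_{3}\subseteq(\{x,y\}\cup\{y,z\})-\{y\}=\{x,z\}$. The remaining task, which I expect to be the main obstacle, is to pin down $C_{3}$ exactly. By (C1) we have $C_{3}\neq\emptyset$; the crucial point is to rule out the singleton possibilities $C_{3}=\{x\}$ and $C_{3}=\{z\}$. Here I would invoke minimality (C2): since $\{x,y\}$ is a circuit containing $x$ with $x\neq y$, the proper subset $\{x\}$ cannot itself be a circuit, and likewise $\{y,z\}$ being a circuit prevents $\{z\}$ from being one. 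Hence $C_{3}$ is neither empty nor a singleton inside $\{x,z\}$, forcing $C_{3}=\{x,z\}$, i.e. $\{x,z\}\in\mathcal{C}(M)$ and therefore $xR(M)z$. Combining this with the degenerate cases establishes transitivity and completes the proof.
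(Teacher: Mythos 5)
Your proof is correct and follows the same route as the paper: symmetry from the unorderedness of the pair $\{x,y\}$, and transitivity from the circuit elimination axiom (C3) applied to the circuits $\{x,y\}$ and $\{y,z\}$ with $e=y$. In fact your version is more complete than the paper's own proof, which passes directly from (C3) to $(x_{1},z_{1})\in R$ without the step you rightly flag as the main obstacle---using minimality, i.e.\ (C2), to rule out that the eliminated circuit $C_{3}\subseteq\{x,z\}$ is a singleton---and without the degenerate cases $x=y$, $y=z$, $x=z$, which you dispatch correctly.
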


\begin{proof}
According to Definition \ref{definition13}, for all $C=\{x_{1}$, $y_{1}\}\in \mathcal{C}(M)$, thus $(x_{1}$, $y_{1})\in R$.
Since $\{x_{1}$, $y_{1}\}=\{y_{1}$, $x_{1}\}$, then $(y_{1}$, $x_{1})\in R$ and symmetry of $R$ holds. For all $\{x_{1}$, $y_{1}\}\in \mathcal{C}(M)$ and $\{y_{1}$, $z_{1}\}\in \mathcal{C}(M)$, according to the (C3) of Theorem \ref{theorem2}, thus $(x_{1}$, $z_{1})\in R$. Therefore, the transitivity of $R$ satisfies.
\end{proof}

In fact, the above proposition shows how to construct a symmetric and transitive relation by a matroid.

\begin{example}
Let $U=\{a$, $b$, $c$, $d$, $e$, $f\}$ be a universe and $M=(U$, $\mathcal{I})$ a matroid on $U$.
If $\mathcal{C}(M)=\{\{a$, $b\}$, $\{a$, $c\}$, $\{a$, $e\}$, $\{b$, $c\}$, $\{c$, $f\}$, $\{e$, $f\} \}$,
according to Proposition \ref{proposition7}, then the symmetric and transitive relation $R(M)$ induced by $M$ is as follows:
\begin{center}
$R(M)=\{(a$, $a)$, $(b$, $b)$, $(c$, $c)$, $(e$, $e)$, $(f$, $f)$, $(a$, $b)$, $(b$, $a)$, $(a$, $c)$, $(c$, $a)$, $(a$, $e)$, $(e$, $a)$, $(b$, $c)$, $(c$, $b)$, $(c$, $f)$, $(f$, $c)$, $(e$, $f)$, $(f$, $e)\}$.
\end{center}
\end{example}

In Section \ref{section3}, we have already proved the union of matroids is a matroid. Now, in following proposition we will
explore the relationships between the symmetric and transitive relation generated by the union of two matroids and the two relations generated
by these two matroids, respectively.

\begin{proposition}
\label{proposition7}
Let $R_{1}$ and $R_{2}$ be two symmetric and transitive relations on $U$.
If the matriods $M(R_{1})$ and $M(R_{2})$ were induced by $R_{1}$ and $R_{2}$, respectively, then $R(M(R_{1})+M(R_{2}))$
is an empty relation.
\end{proposition}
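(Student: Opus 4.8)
The plan is to prove that the union matroid $M(R_{1})+M(R_{2})$ possesses no circuit of cardinality two, so that by Definition \ref{definition13} no pair of distinct elements can be related under $R(M(R_{1})+M(R_{2}))$. The entire argument rests on one structural observation about unions of normal matroids, together with the definition of the union operation.

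First I would invoke Proposition \ref{proposition2}, which tells us that the induced matroids $M(R_{1})$ and $M(R_{2})$ are both normal, i.e.\ $\bigcup\mathcal{I}(R_{1})=\bigcup\mathcal{I}(R_{2})=U$. Combined with the hereditary axiom (I2) of Definition \ref{definition7}, this yields that every singleton is independent in each matroid: for every $x\in U$ we have $\{x\}\in\mathcal{I}(R_{1})$ and $\{x\}\in\mathcal{I}(R_{2})$ (equivalently, a normal matroid has no loops). This is the fact that will trivialize the union.

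The key step is then immediate. Take any two distinct elements $x,y\in U$. Since $\{x\}\in\mathcal{I}(R_{1})$ and $\{y\}\in\mathcal{I}(R_{2})$, the definition of the union (Definition \ref{definition12}) gives $\{x,y\}=\{x\}\cup\{y\}\in\mathcal{I}(R_{1})+\mathcal{I}(R_{2})$. Hence \emph{every} two-element subset of $U$ is independent in $M(R_{1})+M(R_{2})$, so no two-element set is dependent and, a fortiori, none is a minimal dependent set. By Definition \ref{definition9}, $\mathcal{C}(M(R_{1})+M(R_{2}))$ therefore contains no set of cardinality two. Applying Definition \ref{definition13}, for distinct $x,y$ the relation $x\,R(M(R_{1})+M(R_{2}))\,y$ would require $\{x,y\}\in\mathcal{C}(M(R_{1})+M(R_{2}))$, which we have just excluded, so $R(M(R_{1})+M(R_{2}))$ carries no pair of distinct elements.

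I do not expect a genuine obstacle: the proposition is essentially immediate once one notices that normality forces all singletons to be independent, which in turn makes every pair independent in the union. The only point that calls for care is terminological rather than mathematical. Strictly, the clause ``$x=y$'' in Definition \ref{definition13} keeps the diagonal inside $R(M)$, so $R(M(R_{1})+M(R_{2}))$ is never literally empty; I would therefore state explicitly that ``empty relation'' here is to be read as the absence of any related pair of \emph{distinct} elements, i.e.\ that no nontrivial size-two circuit survives the union. With that reading the statement follows exactly as above.
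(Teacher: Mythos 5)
Your proof is correct, and it in fact takes a sounder route than the paper's own. The paper's argument rests on the claim that $\mathcal{C}(M(R_{1})+M(R_{2}))=\emptyset$, i.e.\ that the union matroid is free, and derives the conclusion from that; but this intermediate claim is false in general. For instance, take $U=\{a,b,c,d\}$ and let $R_{1}$ and $R_{2}$ both restrict to the full equivalence relation on $\{a,b,c\}$ (differing, say, only in whether $(d,d)$ is present): every member of $\mathcal{I}(R_{1})$ and of $\mathcal{I}(R_{2})$ contains at most one of $a,b,c$, so every member of $\mathcal{I}(R_{1})+\mathcal{I}(R_{2})$ contains at most two of them, and $\{a,b,c\}$ is a circuit of the union. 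What rescues the proposition is exactly the observation you make: $R(M)$ in Definition \ref{definition13} only consults \emph{two-element} circuits, and normality of $M(R_{1})$ and $M(R_{2})$ (Proposition \ref{proposition2}) guarantees $\{x\}\in\mathcal{I}(R_{1})$ and $\{y\}\in\mathcal{I}(R_{2})$ for all $x,y$, hence $\{x,y\}=\{x\}\cup\{y\}\in\mathcal{I}(R_{1})+\mathcal{I}(R_{2})$ by Definition \ref{definition12}, so no two-element set is a circuit of the union. Your proof establishes precisely this weaker but sufficient statement, and is the argument the paper should have given. Your closing caveat is also well taken: because of the clause ``$x=y$'' in Definition \ref{definition13}, $R(M(R_{1})+M(R_{2}))$ is literally the identity relation rather than the empty one, so the proposition's conclusion has to be read as asserting that no two distinct elements are related.
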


\begin{proof}
According to Definition \ref{definition12} and Definition \ref{definition13}, it is
easy to prove $\mathcal{C}(M(R_{1})+M(R_{2}))=\emptyset$. Hence, $R(M(R_{1})+M(R_{2}))$
is an empty relation.
\end{proof}

\begin{example}(Continued from Example \ref{example7})
According to Example \ref{example7}, we have already known $\mathcal{I}(M(R_{1})+M(R_{2}))=2^{U}$.
Therefore, $\mathcal{C}(M(R_{1})+M(R_{2}))=\emptyset$ and $R(M(R_{1})+M(R_{2}))$
is a empty relation.
\end{example}

\section{Conclusions}\label{section5}
In this paper we construct the matroidal structure of a symmetric and transitive relation
on a nonempty and finite set. Firstly, we use properties and results of the generalized rough
sets to study the properties of the matroid induced by a symmetric and transitive relation. Secondly,
we represent the lower approximation operator and upper approximation operator by the circuits of the
matroid induced by a symmetric and transitive relation. Finally, a symmetric and transitive relation can be generated
by a matroid with some special properties.
\section*{Acknowledgements}\label{section: acknowledgements}
This work is supported in part by the National Natural Science Foundation
of China under Grant No. 61170128, the Natural Science Foundation of Fujian
Province, China, under Grant Nos. 2011J01374 and 2012J01294, and the Science
and Technology Key Project of Fujian Province, China, under Grant No.
2012H0043.

\end{document}